\definecolor{cvprblue}{rgb}{0.21,0.49,0.74}
\newtheorem{lemma}{Lemma}
\newcommand{\bfsection}[1]{\vspace*{0.1cm}\noindent\textbf{#1.}}
\newcommand{\citeneed}[1][]{\textbf{\textcolor{red}{[CITE]}}}
\definecolor{cvprblue}{rgb}{0.21,0.49,0.74}
\title{NexusFlow: Unifying Disparate Tasks under Partial Supervision \\via Invertible Flow Networks}
\author{
\textbf{Fangzhou Lin}$^{1,2,3}$\quad
\textbf{Yuping Wang}$^{4}$\quad
\textbf{Yuliang Guo}$^{5}$\footnotemark[2]\quad
\textbf{Zixun Huang}$^{5}$\quad
\textbf{Xinyu Huang}$^{5}$\quad\\[2pt]
\textbf{Haichong Zhang}$^{2}$\quad
\textbf{Kazunori Yamada}$^{3}$\quad
\textbf{Zhengzhong Tu}$^{2}$\footnotemark[1]\quad
\textbf{Liu Ren}$^{5}$\quad
\textbf{Ziming Zhang}$^{1}$\footnotemark[1]\\[4pt]
\footnotesize
$^{1}$ Worcester Polytechnic Institute\;
$^{2}$Texas A\&M University \;
$^{3}$Tohoku University \\
\footnotesize
$^{4}$University of Michigan  \;
\footnotesize
$^{5}$Bosch Research North America \& Bosch Center for AI \;\\
\footnotesize
\footnotesize
\texttt{https://github.com/ark1234/NexusFlow}
}
\let\@oldmaketitle\@maketitle
\renewcommand{\@maketitle}{\@oldmaketitle
\vspace{-15pt}
\centering\includegraphics[trim=0mm 0mm 0mm 0mm, clip, width=1.0\linewidth]{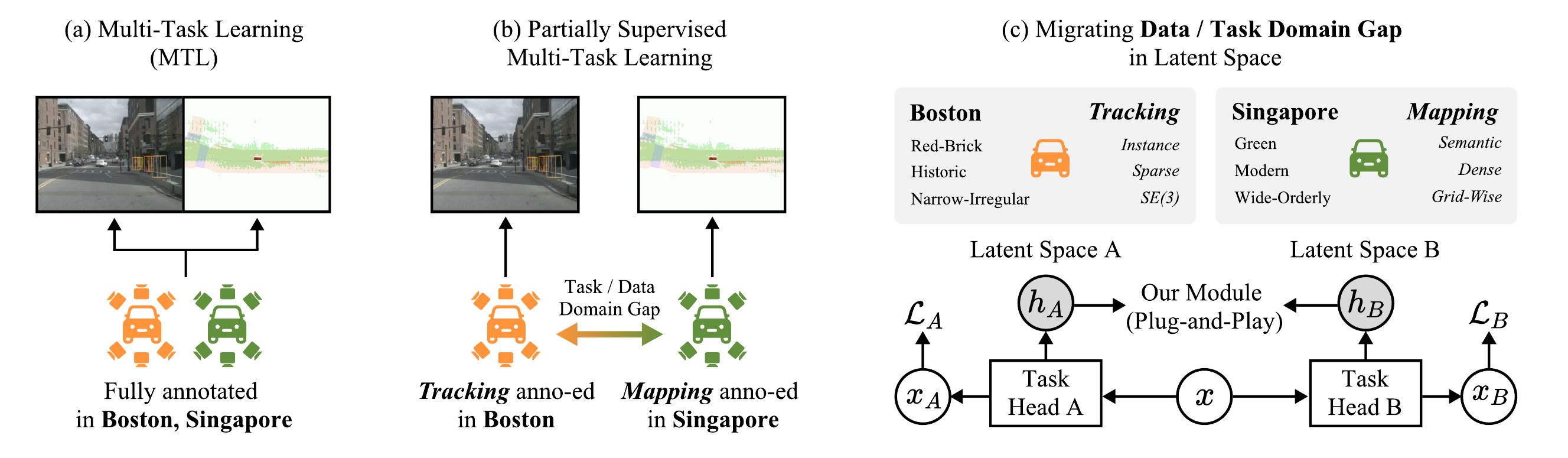}
\captionof{figure}{
\textbf{Problem Setup \& Motivation.} 
We illustrate the setup of \textit{Partially Supervised Multi-Task Learning} using autonomous driving as a representative example.  
(a) In the ideal case, all training data are fully annotated for all tasks (e.g., tracking and mapping), and mixed training achieves the upper bound of Multi-Task Learning (MTL) performance. 
(b) In practice, however, collaborative communities often provide large amounts of valuable but single-task-oriented datasets. These datasets usually differ in both task labels and domains (e.g., geographic or scene-level domain gaps). 
Naively mixing them leads to incomplete supervision, causing degraded performance and poor cross-domain generalization compared with fully supervised MTL. 
(c) Our goal is to bridge both task and data domain gaps using a simple yet effective migration strategy in the latent space.
}
\vspace{20pt}
\label{fig:teaser}}
\begin{document}
\def\customfootnotetext#1#2{{%
  \let\thefootnote\relax
  \footnotetext[#1]{#2}}}

\maketitle

\customfootnotetext{2}
{\textsuperscript{$\dagger$}Project lead: Yuliang Guo}
\customfootnotetext{1}{\textsuperscript{*}Corresponding Author: Zhengzhong Tu and Ziming Zhang}

\begin{abstract}

Partially Supervised Multi-Task Learning (PS-MTL) aims to leverage knowledge across tasks when annotations are incomplete. Existing approaches, however, have largely focused on the simpler setting of homogeneous, dense prediction tasks, leaving the more realistic challenge of learning from structurally diverse tasks unexplored. To this end, we introduce \textbf{NexusFlow}, a novel, lightweight, and plug-and-play framework effective in \textit{both} settings. NexusFlow introduces a set of surrogate networks with invertible coupling layers to align the latent feature distributions of tasks, creating a unified representation that enables effective knowledge transfer. The coupling layers are bijective, preserving information while mapping features into a shared canonical space. This invertibility avoids representational collapse and enables alignment across structurally different tasks without reducing expressive capacity.
We first evaluate NexusFlow on the core challenge of domain-partitioned autonomous driving, where dense map reconstruction and sparse multi-object tracking are supervised in different geographic regions, creating both structural disparity and a strong domain gap. NexusFlow sets a new state-of-the-art result on nuScenes, outperforming strong partially supervised baselines. To demonstrate generality, we further test NexusFlow on NYUv2 using three homogeneous dense prediction tasks, segmentation, depth, and surface normals, as a representative N-task PS-MTL scenario. NexusFlow yields consistent gains across all tasks, confirming its broad applicability. 

\end{abstract}    
\section{Introduction}
\label{sec:intro}

Learning multiple tasks simultaneously via Multi-Task Learning (MTL) is a powerful paradigm for improving model efficiency and generalization while avoiding redundant training compared with single task~\cite{xu2018pad,zhang2019pattern,vandenhende2020mti,xu2022fpcc,lin2023hyperbolic,yue2024understanding,li2024region,wu2026consid}. However, its real-world applicability is often limited by the prohibitive cost of acquiring exhaustive annotations for every task, particularly in vision-heavy domains. In practice, datasets frequently lack labels for some tasks, or contain incomplete and unreliable annotations, motivating the crucial research direction of Partially Supervised Multi-Task Learning (PS-MTL)~\cite{li2022learning}. 

Significant progress has been made when tasks are homogeneous dense predictions, thanks to their natural interdependence~\cite{zamir2018taskonomy,zamir2020robust}. This has enabled joint training in both partially and fully supervised settings for tasks such as semantic segmentation, depth estimation, and surface normals estimation~\cite{eigen2014depth,he2017mask,chen2018gradnorm,poggi2020uncertainty,zhang2020uc}. Yet, when tasks are structurally disparate (e.g., one requiring dense pixel-wise labels and another producing sparse instance-level outputs), the challenge becomes far greater and has received very limited attention. 

The challenge becomes even more pronounced under realistic supervision patterns. Existing PS-MTL methods typically simulate missing labels by randomly masking task annotations~\cite{li2024region}, resulting in a controlled but overly simplified setup~\cite{ye2024diffusionmtl}. In practice, however, annotations for different tasks are often collected in disjoint domains (e.g., distinct geographic regions or capture environments). This creates a strong decoupling between \textit{task type} and \textit{data domain}, introducing substantial domain shift on top of structural disparity. Such domain-partitioned, heterogeneous-task conditions are common in real-world multi-task systems but remain largely unexplored. To the best of our knowledge, no prior work systematically addresses PS-MTL in this challenging and practically important regime.

\begin{figure}
    \centering
    \includegraphics[width=1.0\linewidth]{}
    \vspace{-10pt}
    \caption{
    \textbf{NexusFlow Pipeline.} 
    A simple yet general framework that scales to the $N$-task partially supervised MTL setting. 
    Given a data batch where only \textit{Task 1} has annotations, we extract latent features $h_i$ from all $N$ task heads. These features are then encoded into a shared representation space $\{z_i\}$ via an invertible network, where the model minimizes the distance between each $z_i$ and their mean to promote cross-task consistency.
    }
    \label{fig:pipeline}
    \vspace{-16pt}
\end{figure}

Motivated by the challenge of learning heterogeneous tasks under domain-partitioned partial supervision, we introduce \textbf{NexusFlow}, a simple yet effective mechanism that aligns latent feature distributions during standard backpropagation. Instead of modifying task heads or requiring architectural coupling, NexusFlow provides a lightweight, plug-and-play module that operates purely in feature space (Figure~\ref{fig:teaser}), making it broadly applicable to diverse multi-task systems.

Inspired by flow-based invertible models~\cite{Dinh2014NICE,DinhSB17:realnvp,ardizzone2018analyzing,behrmann2019invertible,kobyzev2020normalizing}, NexusFlow inserts, for each task, a dimensionality reduction module followed by invertible affine coupling layers (Figure~\ref{fig:pipeline}). These coupling layers are bijective: they preserve all task-relevant information while enabling flexible transformations into a shared canonical space. This invertibility is essential; unlike conventional CNN-based alignment modules, the coupling layers prevent representational collapse beyond the reduced MLP embedding and allow structurally different tasks (e.g., sparse instance sets vs.\ dense geometric fields) to be aligned without sacrificing expressive capacity. By explicitly minimizing distributional discrepancies within this canonical space, NexusFlow encourages the model to discover cross-task feature representations that generalize even when each task is supervised in disjoint domains. This leads to a unified latent space that is more robust to both structural disparity and domain shift.

We validate NexusFlow in two complementary settings:  
(1) a structurally disparate, domain-partitioned two-task perception system on the nuScenes dataset~\cite{caesar2020nuscenes}, involving multi-object tracking and map reconstruction under PS-MTL; and  
(2) a homogeneous three-task PS-MTL setup on NYUv2~\cite{silberman2012indoor}, covering dense prediction tasks of semantic segmentation, depth estimation, and surface normal estimation. Across both scenarios, NexusFlow not only delivers substantial gains over strong baselines but also surpasses baselines integrated with prior state-of-the-art PS-MTL modules developed for homogeneous tasks with random label masking, demonstrating strong generalization and broad applicability.

    



Our key contributions are as follows:

\begin{itemize}[nosep, leftmargin=*]

\item We are, to the best of our knowledge, the \textit{first} to systematically formulate and tackle Partially Supervised Multi-Task Learning (PS-MTL) for \textit{structurally disparate} tasks under \textit{domain-partitioned} supervision, an important yet largely unexplored problem for modern perception systems.

\item We introduce \textbf{NexusFlow}, a lightweight, plug-and-play framework that addresses this challenge using \textbf{independent invertible coupling layers} to align task-specific feature distributions and enable effective cross-task knowledge transfer.

\item We provide both theoretical and empirical analysis showing that our invertible transformations enable reliable alignment of feature distributions across heterogeneous tasks while alleviating dimensionality collapse.

\item We conduct extensive experiments across autonomous driving and indoor dense prediction benchmarks, demonstrating the effectiveness and generality of NexusFlow.



\end{itemize}
\section{Related Work}

\bfsection{Partially Supervised Multi-Task Learning} Partially Supervised Multi-Task Learning (PS-MTL) tackles the challenge of learning multiple tasks when only a subset has labels per sample. Early approaches showed success mainly in \textbf{homogeneous dense prediction tasks} (e.g., semantic segmentation, depth estimation), using strategies like consistency regularization, pseudo-labeling, or adversarial training. For example, adversarial discriminators have been used to align distributions across partially labeled datasets~\cite{wang2022semi}, while consistency-based methods either employ model augmentation to provide pseudo-supervision~\cite{spinola2023knowledge} or regularize cross-task relationships in a shared space~\cite{li2022learning}. Pseudo-labeling has also been advanced via hierarchical task tokens for “label discovery,” enabling dense supervision for unlabeled tasks~\cite{zhang2024multi}.

Despite these advances, such techniques remain tailored to homogeneous tasks and are not directly applicable to \textbf{structurally heterogeneous settings} like object tracking versus map segmentation. Even recent unified frameworks for autonomous driving~\cite{hu2023planning, huang2023fuller,wang2025uniocc,xing2025openemma} assume full supervision, leaving unresolved the realistic case of partially annotated, structurally disparate tasks—where annotation costs are prohibitive~\cite{dulac2021challenges,lin2023infocd,vettoruzzo2024advances,zhang2025gps,dai2025language}. A central challenge is learning shared representations without negative transfer. Architectural methods explore optimal layer sharing~\cite{ruder2019latent}, while alignment-based methods project features into a common latent space. Early pairwise approaches~\cite{li2022learning} struggled with quadratic complexity, motivating more scalable solutions like JTR~\cite{nishi2024joint}, which stacks all predictions into a unified joint-task space, or StableMTL~\cite{cao2025stablemtl}, which applies a unified latent loss with efficient 1-to-N attention.

Beyond scalability, newer methods seek richer cross-task knowledge transfer. Region-aware strategies use SAM~\cite{kirillov2023segment} to detect local regions and model their features as Gaussian distributions, enabling fine-grained alignment~\cite{li2024region}. DiffusionMTL~\cite{ye2024diffusionmtl} reframes partially labeled outputs as noisy predictions and refines them through a denoising diffusion process with multi-task conditioning. These works highlight the trend towards more sophisticated, scalable PS-MTL approaches, but none address the harder setting of \textbf{structurally disparate tasks under domain-partitioned supervision}, which is the focus of our work.

\bfsection{Multi-Task Learning for Autonomous Driving Perception} Autonomous driving perception has shifted from modular pipelines (separate detection, tracking, and mapping) to integrated multi-task learning (MTL) frameworks~\cite{wang2025generative,wang2025uniocc,gao2025safecoop}, motivated by efficiency, performance, and shared representations~\cite{xu2022fpcc,luo2025v2x,xing2024autotrust,xing2025openemma}.

A major breakthrough is end-to-end models that unify perception, prediction, and planning. UniAD~\cite{hu2023planning} pioneered this direction with a Bird’s-Eye-View (BEV) representation and Transformer-based modules for tracking (TrackFormer), mapping (MapFormer), motion prediction (MotionFormer), and occupancy prediction (OccFormer), all linked by a query-based mechanism. Building on this, GenAD~\cite{zheng2024genad} models the scene generatively for joint prediction and planning, while DriveTransformer~\cite{jia2025drivetransformer} parallelizes tasks via shared attention for scalability. VAD~\cite{jiang2023vad} instead uses a fully vectorized scene representation, improving efficiency and reducing collision rates.

A key challenge is \textbf{domain generalization}, as perception models face shifts in weather, lighting, sensors, and geography. ~\cite{wang2021generalizing} highlights this as central to real-world ML. For BEV-based systems, ~\cite{wang2023towards} analyzes domain gaps in 3D detection and proposes robust depth learning. ~\cite{jiang2024bev} introduces DA-BEV for unsupervised adaptation, combining image-view and BEV features. ~\cite{chang2024unified} unifies domain generalization and adaptation with multi-view overlap depth constraints. After all, the \textbf{cost of annotation} remains prohibitive: datasets are often partially labeled. Li ~\cite{li2022learning} addresses this by mapping task pairs into a joint space to enable sharing under incomplete supervision, a critical issue for large-scale autonomous driving systems where exhaustive labels are impractical.
\section{Approach}\label{sec:approach}

\subsection{Preliminaries}

\bfsection{Notations \& Problem Definition} \label{prob_def} We formulate our problem within the framework of Partially Supervised Multi-Task Learning (PS-MTL). We consider a set of $n$ tasks, $\mathcal{T} = \{\mathcal{T}_1, \dots, \mathcal{T}_n\}$. Our goal is to train a single unified model that shares knowledge across these tasks.

Following the standard MTL paradigm, our model consists of a shared feature encoder and $n$ task-specific decoders (or "heads"). Given an input (can be either image or frames), the encoder produces a shared latent representation. Each head then processes this representation to produce a task-specific prediction. 

The core difficulty of PS-MTL originates from (1) the nature of the tasks $\mathcal{T}$ and (2) the distribution of the supervision masks $\mathbf{m}$. In this paper, our main focus is to tackle \textbf{Domain-Partitioned, Structurally Disparate PS-MTL (Core Challenge)}.
This setting represents the central focus of our work and introduces significant real-world challenges absent in previous methods.
\begin{itemize}[leftmargin=*,itemsep=0pt,topsep=0pt,parsep=0pt]
    \item \textbf{Structurally Disparate Tasks:} The tasks are fundamentally different in their output structure. This disparity is most evident in their output spaces:
    \begin{itemize}[leftmargin=*,itemsep=0pt]
        \item Map Reconstruction ($\mathcal{T}_{map}$): a dense, grid-based representation, where each spatial location is assigned a semantic class (e.g., lane, divider, drivable area).
        \item Multi-Object Tracking($\mathcal{T}_{track}$): a sparse, instance-level set, where each object is represented by a bounding box and an identity, with the number of objects varying across samples.
    \end{itemize}
    \item \textbf{Homogeneous Tasks:} All tasks are of a similar nature, specifically dense predictions. For example, $K=3$ with $\mathcal{T}_1$ being semantic segmentation, $\mathcal{T}_2$ being depth estimation  and $\mathcal{T}_3$ being surface normals estimation. All tasks produce a dense output. The PS-MTL under this setup was mostly explored in the earlier methods.
    \item \textbf{Domain-Partitioned Supervision:} This is a far more challenging scenario than random masking. Supervision is assigned according to geographically distinct subsets of data. Specifically, in our experiments on the nuScenes dataset~\cite{caesar2020nuscenes}, one task is annotated \textit{only} in Boston scenes, while the other is annotated \textit{only} in Singapore scenes. This means for any given sample, the mask is \textit{strict}, and is perfectly correlated with the input's geographic domain. This creates a substantial \textit{domain gap} between supervision sources, which, when combined with structural disparity, dramatically increases the difficulty of knowledge transfer. This setup also applies to indoor domains with domain gaps between regionally distinct subsets. 
\end{itemize}

\subsection{NexusFlow}
\label{sec:nexusflow_general}
Given a partially-supervised multi-task learning (PS-MTL) setting composed of 
$n$ tasks 
$\{\mathcal{T}_1, \mathcal{T}_2, \dots, \mathcal{T}_n\}$,
our goal is to improve cross-task consistency by aligning latent
representations while keeping the baseline architecture unchanged.
Figure~\ref{fig:pipeline} provides an overview of the unified design.

Let the baseline model contain a shared encoder (or any shared representation
module), followed by $n$ task-specific branches.
We denote the intermediate feature for task $t_i$ as $f^{(i)}$.
NexusFlow augments this baseline with lightweight plug-and-play surrogate
modules that convert these intermediate features into a common latent space for
distribution alignment.

\paragraph{Surrogate Modules.}
For each task $t_i$, NexusFlow adds a surrogate module  
$S_{\mathrm{surro}_i}(\cdot)$ composed of two components:

\textbf{(1) Feature Aggregator.}
Each task feature $h_{i}$ is compressed into a fixed-dimensional embedding: $h'_{i} = g_i\!\left(h_{i}\right),$ where $g_i(\cdot)$ is a lightweight feature aggregator  
(e.g., MLP, deformable attention).  
Importantly, the baseline forward path remains entirely unaffected.

\textbf{(2) Invertible Transformation.}
Each embedding is transformed by an affine coupling layer
$c_i(\cdot)$ from normalizing flows~\cite{Dinh2014NICE}: $z_{i} = c_i\!\left(h'_{i}\right)$. We use the standard affine coupling transform introduced in RealNVP~\cite{DinhSB17:realnvp}. Given an embedding $h'_{i}$ split into $ (h_{i}^{'1}, h_{i}^{'2})$, the layer performs:

\begin{align}
c(h_{i}) &= 
\bigl(\mathbf\; h_{i}^{'2} \odot \exp\bigl(s(h_{i}^{'1})\bigr) + t(h_{i}^{'1})\bigr),
\end{align}
where $s(\cdot)$ and $t(\cdot)$ are small MLPs. Due to the design of the coupling transform, its inverse admits a closed-form expression that directly reuses the forward $s(\cdot)$ or $t(\cdot)$ functions, without requiring inversion of these MLPs~\cite{DinhSB17:realnvp}.  
This bijective transformation maps features into a canonical latent space while 
preserving information content.

\paragraph{Distribution Alignment Objective.}
Let $\mathcal{L}_{t_1}, \dots, \mathcal{L}_{t_n}$ denote the original task
losses of the baseline, which remain unchanged.

NexusFlow introduces an auxiliary alignment loss:
\begin{equation}
\mathcal{L}_{\mathrm{align}}
= \mathrm{Align}\bigl(z^{1}, z^{2}, \dots, z^{n}\bigr).
\end{equation}

We consider two general formulations:

\textbf{(A) Pairwise Matching:}
\begin{equation}
\mathcal{L}_{\mathrm{align(pair)}} 
= \sum_{i<j} \| z^{i} - z^{j} \|_2^2.
\end{equation}

\textbf{(B) Center-Based Matching:}
First compute the latent center: $\bar{z} = \frac{1}{n} \sum_{i=1}^{n} z^{i},$

then align each latent to it:
\begin{equation}
\mathcal{L}_{\mathrm{align(center)}}
= \sum_{i=1}^{n} \| z^{i} - \bar{z} \|_2^2.
\end{equation}


The pairwise formulation requires $O(n^2)$ pairwise comparisons, whereas the center-based formulation uses only $O(n)$ terms, we adopt the center-based objective for its substantially higher efficiency and simpler gradient structure. For completeness, we also compare both variants in Table~\ref{tab:nyuv2full}.

\paragraph{Total Training Objective.}
We weigh $\mathcal{L}_{\mathrm{align}}$ by $\lambda$ and add it to the task losses to form the total loss:
\begin{equation}
\mathcal{L}_{\mathrm{all}}
=
\sum_{i=1}^{n} \mathcal{L}_{t_i}
\;+\;
\lambda\, \mathcal{L}_{\mathrm{align}}.
\end{equation}

NexusFlow can be applied either to {one-phase joint training}, where the
alignment loss is active throughout, or to a {two-phase fine-tuning}
strategy. As shown in Table~\ref{tab:sota-combined}, both yield consistent improvements,
demonstrating the stability and plug-and-play nature of the approach.

\begin{figure*}[t]
\centering
\includegraphics[width=1.0\linewidth]
{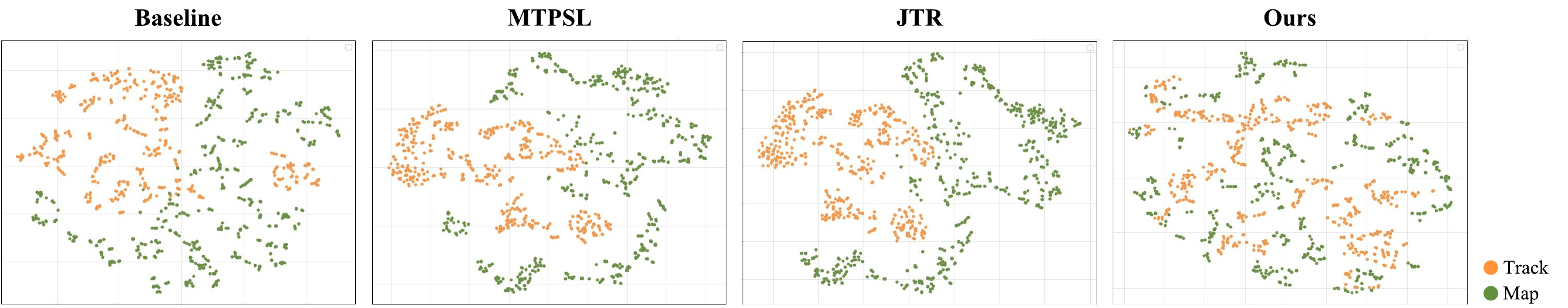}
\caption{\small{From left to right: t-SNE visualizations from the coupling layers of Baseline, MTPSL, JTR, and NexusFlow(Ours).}}

\label{fig:tsne_analysis}

\end{figure*}

\begin{figure*}[t]
\centering
\includegraphics[width=1.0\linewidth]
{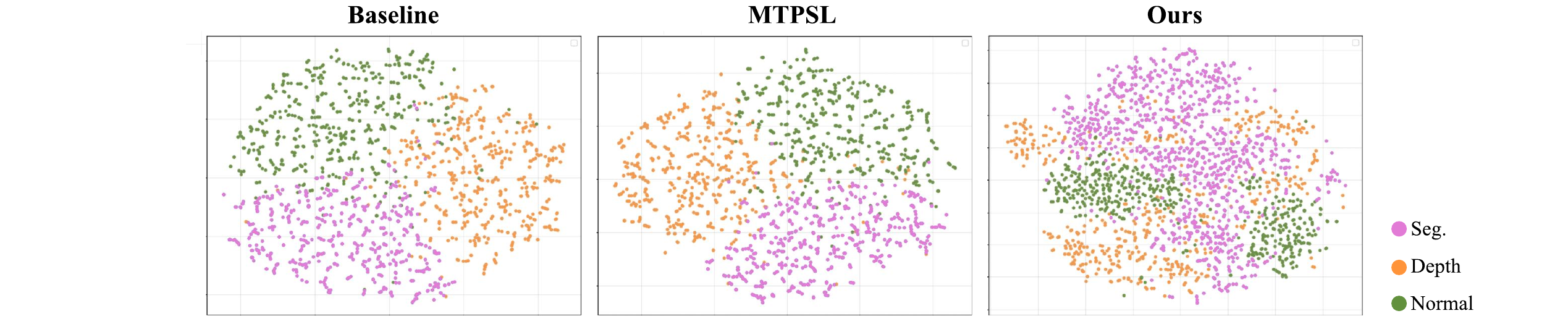}
\caption{{From left to right: t-SNE visualizations from the coupling layers of Baseline, MTPSL, and NexusFlow (Ours).}}

\label{fig:tsne_analysis_2}
\vspace{-15pt}
\end{figure*}

\subsection{Theoretical Analysis}
\label{ssec:theory}

We present a formal analysis of our alignment mechanism.  
Its effectiveness relies critically on the invertibility of the coupling layers
$c(\cdot)$, which ensures a one-to-one mapping between the compact
task features $h'_{i}$ and their latent representations $z_{i}$.  
This invertibility prevents representational collapse and guarantees that
minimizing the latent-space alignment loss $\mathcal{L}_{\text{align}}$ has a
direct, controllable effect on the alignment of the underlying task features.
The following lemma establishes this relationship for any pair of tasks
$t_1$ and $t_2$.

\begin{lemma}[Bounded Feature Discrepancy]
\label{lemma:bound}
Let $h'_{1}, h'_{2} \in \mathbb{R}^N$ be the compact features of two tasks
$t_1$ and $t_2$ passed into their coupling layers 
$c_{1}$ and $c_{2}$.
Assume their inverse transformations 
$c_{1}^{-1}$ and $c_{2}^{-1}$ 
are $L$-Lipschitz continuous with constant $L$%
~\cite{virmaux2018lipschitz,gouk2021regularisation}.  
Then the L2 distance between the original features is upper-bounded by
\begin{equation}
\| h'_{1} - h'_{2} \|_2 
\;\le\;
L \cdot \sqrt{\mathcal{L}_{\mathrm{align}}}
\;+\; \delta,
\end{equation}
where $\delta$ denotes the maximum structural discrepancy between the two
inverse transformations over the domain of interest.
\end{lemma}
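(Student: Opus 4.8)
The plan is to push the inequality from latent space back to feature space using the bijectivity of the coupling layers, split the resulting distance with a single well-chosen intermediate point, and bound the two pieces independently.

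First I would invoke invertibility: since each $c_i$ is an affine coupling layer it is a bijection, so $h'_{1} = c_{1}^{-1}(z_{1})$ and $h'_{2} = c_{2}^{-1}(z_{2})$. Inserting the cross term $c_{1}^{-1}(z_{2})$ and applying the triangle inequality gives
\begin{align*}
\| h'_{1} - h'_{2} \|_2
&\le \underbrace{\| c_{1}^{-1}(z_{1}) - c_{1}^{-1}(z_{2}) \|_2}_{(\mathrm{I})} \\
&\quad + \underbrace{\| c_{1}^{-1}(z_{2}) - c_{2}^{-1}(z_{2}) \|_2}_{(\mathrm{II})}.
\end{align*}
Term $(\mathrm{I})$ involves a \emph{single} inverse map evaluated at two points, so the $L$-Lipschitz assumption on $c_{1}^{-1}$ yields $(\mathrm{I}) \le L\,\| z_{1} - z_{2} \|_2$. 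It then remains to connect $\| z_{1} - z_{2} \|_2$ to $\mathcal{L}_{\mathrm{align}}$: for $n=2$ the center-based objective reduces to a single pair term, and for general $n$ the triangle inequality through $\bar z$ gives $\| z_{i} - z_{j} \|_2 \le \| z_{i} - \bar z \|_2 + \| z_{j} - \bar z \|_2 \le \mathcal{L}_{\mathrm{align}}$; writing the alignment objective with squared norms (or absorbing a constant into $L$) then produces the stated $\sqrt{\mathcal{L}_{\mathrm{align}}}$ factor. Term $(\mathrm{II})$ compares the \emph{two} inverse maps at the \emph{same} point, which is exactly the ``structural discrepancy'' $\delta$: I would define $\delta := \sup_{z \in \mathcal{Z}} \| c_{1}^{-1}(z) - c_{2}^{-1}(z) \|_2$ over the bounded region $\mathcal{Z}$ of latent codes visited during training, so that $(\mathrm{II}) \le \delta$ trivially. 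Summing the two bounds delivers $\| h'_{1} - h'_{2} \|_2 \le L\sqrt{\mathcal{L}_{\mathrm{align}}} + \delta$.

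The main obstacle is making term $(\mathrm{II})$ --- and hence $\delta$ --- rigorous rather than rhetorical. I would (i) argue that the latent codes lie in a bounded set $\mathcal{Z}$ because the aggregated features $h'_i$ are bounded and the coupling layers are continuous, so the supremum defining $\delta$ is finite; (ii) observe that $\delta$ captures an \emph{irreducible} gap: when the two surrogate flows are parameterized identically, $c_1^{-1} \equiv c_2^{-1}$ and $\delta = 0$, recovering the clean statement that driving $\mathcal{L}_{\mathrm{align}} \to 0$ collapses the task features onto one another; and (iii) note that $\delta$ can be made small in practice by weight-tying or regularizing $s(\cdot), t(\cdot)$ across tasks. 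A secondary, purely cosmetic point is reconciling the $\sqrt{\cdot}$ in the statement with the non-squared $\ell_2$ form of $\mathcal{L}_{\mathrm{align}}$ used in Section~\ref{sec:nexusflow_general}; the cleanest fix is to state the lemma for the squared alignment loss, which changes nothing essential. Invertibility is what makes the whole argument possible --- without it the map $h'_i \mapsto z_i$ need not admit the Lipschitz inverse that controls term $(\mathrm{I})$, and representational collapse ($z_1 = z_2$ with $h'_1 \ne h'_2$) could not be ruled out.
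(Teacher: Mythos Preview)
Your proposal is correct and follows essentially the same route as the paper: rewrite $h'_i = c_i^{-1}(z_i)$, insert the cross term $c_1^{-1}(z_2)$, apply the triangle inequality, bound term~(I) by Lipschitz continuity of $c_1^{-1}$, and absorb term~(II) into the structural discrepancy $\delta$. Your treatment is in fact more careful than the paper's, which simply asserts $\|z_1 - z_2\|_2 = \sqrt{\mathcal{L}_{\mathrm{align}}}$ without addressing the squared-versus-unsquared inconsistency you flag, and which leaves $\delta$ as a verbal ``constant'' rather than the explicit supremum you write down.
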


\begin{proof}[Proof]
We rewrite the feature discrepancy as
\[
\| h'_{1} - h'_{2} \|_2
=
\| c_{1}^{-1}(z_{1})
   - 
   c_{2}^{-1}(z_{2}) 
\|_2.
\]
Applying the triangle inequality yields
\begin{equation}
\begin{split}
\| h'_{1} - h'_{2} \|_2 
\;\le\;
\| c_{1}^{-1}(z_{1})
   - 
   c_{1}^{-1}(z_{2}) 
\|_2
\;+\;
\\
\| c_{1}^{-1}(z_{2})
   -
   c_{2}^{-1}(z_{2})
\|_2.
\label{eq:triangle_proof_general}
\end{split}
\end{equation}

The first term is bounded by the Lipschitz continuity of 
$c_{1}^{-1}$:
\[
\| c_{1}^{-1}(z_{1}) 
    - 
    c_{1}^{-1}(z_{2}) 
\|_2
\;\le\;
L \cdot \| z_{1} - z_{2} \|_2
=
L \cdot \sqrt{\mathcal{L}_{\mathrm{align}}}.
\]

The second term depends only on the structural difference between the two 
inverse networks when evaluated at the same latent representation.
Thus it is bounded by a constant $\delta$.

Combining these bounds yields the result.
\end{proof}

This lemma provides a theoretical guarantee: minimizing the alignment loss
$\mathcal{L}_{\mathrm{align}}$ in the latent space directly reduces an upper
bound on the distance between the original task feature distributions.  
Thus, NexusFlow induces provable cross-task feature agreement while preserving
the expressiveness of each task head. 

More specifically, when two tasks are aligned in the latent space via invertible coupling layers, this alignment provably transfers back to the original task-specific feature spaces, up to a controllable error. 


\subsection{Practical Analysis}\label{ssec:analysis}

Building upon our theoretical guarantee, we provide a practical analysis to empirically verify the core mechanism of NexusFlow. We hypothesize that by explicitly aligning the latent feature distributions, our method forges a unified and more effective representation space for knowledge transfer. To validate this, we investigate two key properties of the learned representations: their \textit{alignment} and \textit{intrinsic dimensionality}. We assess alignment both qualitatively through t-SNE visualizations~\cite{maaten2008visualizing,cai2022theoretical} and quantitatively using the Maximum Mean Discrepancy (MMD) metric~\cite{gretton2012kernel,zhang2025gps}. We then analyze their intrinsic dimensionality via Principal Component Analysis (PCA)~\cite{abdi2010principal,del2021effective} to ensure the unified representation is also complex enough to serve both disparate tasks.

\begin{table*}[h]
    \centering	
    \caption{MMD score comparison in two datasets on same training setting (smaller means greater similarity).}
    \vspace{-3mm}
    \resizebox{0.7\textwidth}{!}{
 \begin{tabular}{c|cccccc}
        \toprule
        & Ref & Baseline & MTPSL & JTR & Ours (w/o inv) &Ours \\
        \midrule
        MMD(nuScenes) & 0.23$\pm$0.32 & 2.97$\pm$0.54  & 2.81$\pm$0.36 & 2.77$\pm$0.54 
        & 2.54$\pm$0.29
        & \textbf{1.56$\pm$0.47} \\
        \midrule
        MMD(NYU-V2) & 1.76$\pm$0.25 & 4.48$\pm$0.49 & 3.76$\pm$0.41 & $\backslash$  &
        $\backslash$
        & \textbf{3.02$\pm$0.36} \\        
        \bottomrule
    \end{tabular} 
    }
    \vspace{-0.3em}

    
    \label{tab:SSL-results}
    \vspace{-1mm}
\end{table*}    

\paragraph{Distribution Alignment Analysis.}

We first evaluate the alignment of the feature distributions. For a qualitative understanding, we employ t-SNE~\cite{maaten2008visualizing,cai2022theoretical} to visualize two (or three) sets of high-dimensional features: the latent variables produced by our NexusFlow module. As illustrated in Figure~\ref{fig:tsne_analysis} and Figure~\ref{fig:tsne_analysis_2}, the visualization is revealing. The critical distinction shows the latent variables after the our designed surrogate networks. Here, the features from the baseline methods still exhibit a clear distributional shift. In contrast, the latent variables from our NexusFlow model become substantially intermingled, providing visual evidence that our method creates a more unified manifold for knowledge transfer.

This qualitative observation is substantiated by our quantitative analysis using the Maximum Mean Discrepancy (MMD). MMD is a standard metric for measuring the distance between two distributions, where a lower score signifies greater similarity~\cite{gretton2012kernel,yue2024understanding,chen2024fair}. We compute the MMD between the intermediate features(after NexusFlow module) from mapping-annotated samples and tracking-annotated samples on nuScenes dataset, we also compute the MMD among segmentation-annotated samples, depth-annotated samples and normal-annotated samples on NYU-V2 datasets. As shown in Table~\ref{tab:SSL-results}, on both datasets, NexusFlow module leads to a significant reduction in the MMD score compared to all baselines. This quantitatively validates that NexusFlow effectively reduces the distributional distance between the disparate tasks, creating the statistical foundation for knowledge transfer.

\begin{figure}[h]
\centering
\includegraphics[width=0.85\linewidth]
{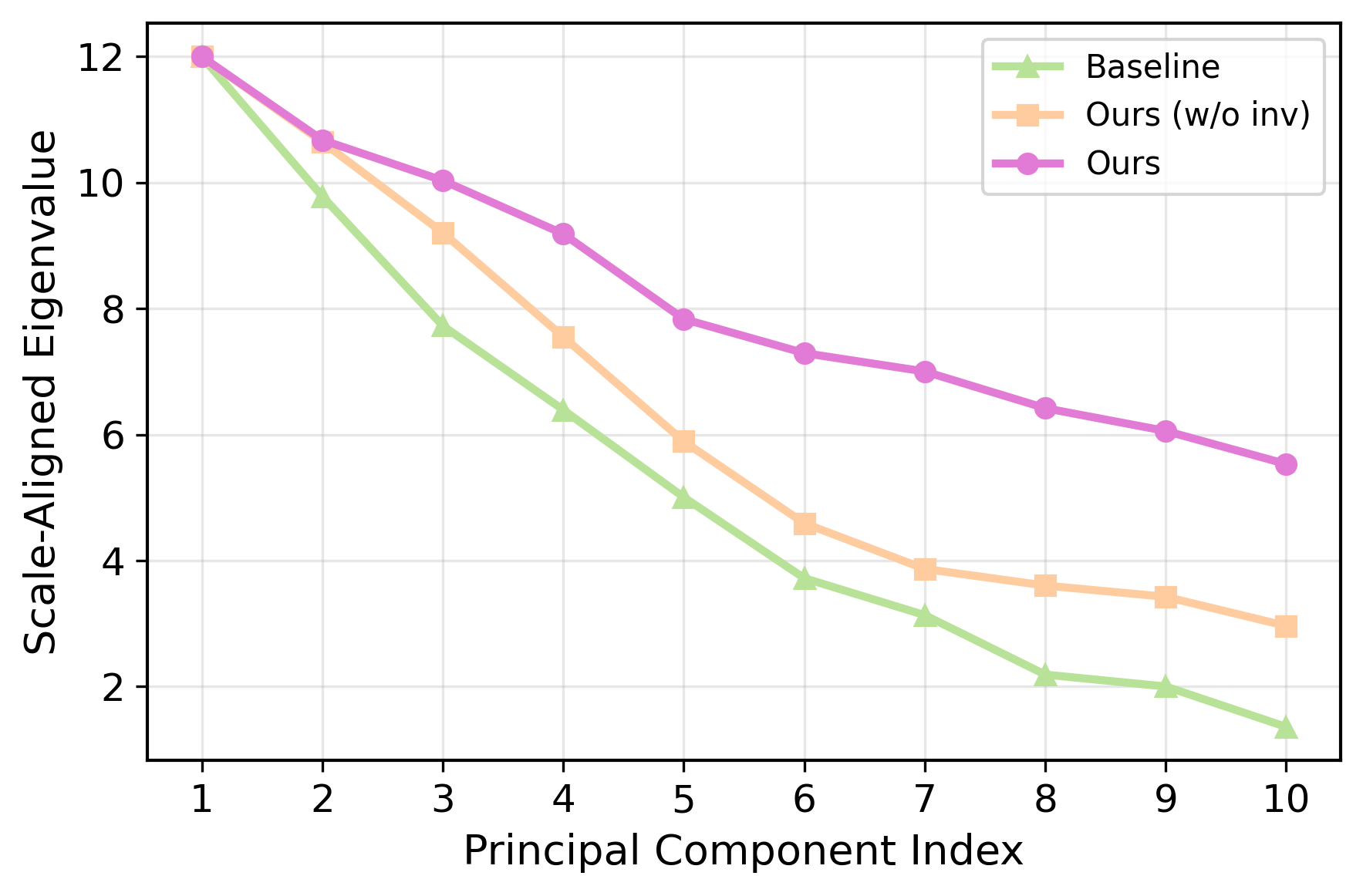}
\vspace{-0.8em}
\caption{\small{Figure of eigenvalue magnitudes decay. Slower decay indicate forging more complex and information-rich feature space.}}
\vspace{-2.0em}
\label{fig:pca}
\end{figure}

\paragraph{Intrinsic Dimensionality Analysis.}
Beyond showing the distributions are closer, we analyze the feature distributions' intrinsic complexity using Principal Component Analysis (PCA). Our analysis is guided by the premise that a representation with a slower decay in its eigenvalue magnitudes holds more informative dimensions, making it more suitable for complex, multi-tasks~\cite{goel2017eigenvalue,ansuini2019intrinsic,kim2023vne,zhang2025gps}. As shown in the Scree plot~\cite{zhu2006automatic} in Figure~\ref{fig:pca}, the features from the baseline and a variant of our model where we ablate the invertible layer (`Ours (w/o inv)`) exhibit rapid eigenvalue decay, suggesting their representations are more compressible and may lose nuanced information. In contrast, the eigenvalue of our full NexusFlow model decays at a slower rate. This provides evidence that our alignment process forges a more complex and information-rich feature space, one that retains the high dimensionality necessary to effectively serve two structurally disparate tasks.

\section{Experiments}
\label{sec:exp}

Our experimental evaluation is structured in two parts, corresponding to the problem settings defined in Section ~\ref{prob_def}.

\subsection{Core Challenge on nuScenes}
This setting addresses our central challenge of domain-partitioned, structurally disparate tasks.

\bfsection{Dataset and PS-MTL Protocol}
We use nuScenes~\cite{caesar2020nuscenes}, a large-scale autonomous driving dataset captured in Boston and Singapore. It provides annotations for our two disparate tasks: multi-object tracking ($\mathcal{T}_{track}$) and map reconstruction ($\mathcal{T}_{map}$). To rigorously evaluate the performance under an explicit domain gap, we design a specific data protocol: ground-truth annotations for the \textbf{\textit{mapping}} task are \textit{only} provided for scenes in \textbf{Boston}, while annotations for the \textbf{\textit{tracking}} task are \textit{only} provided for scenes in \textbf{Singapore}. This challenging, geographically-partitioned supervision protocol is applied consistently across all evaluated methods.

\bfsection{Baselines and Architecture}
To ensure a fair comparison, all methods are built upon the SOTA \textbf{UniAD}~\cite{hu2023planning} architecture as the backbone. We compare against several methods: (1) \textbf{Full-supervision }, trained with full supervision for both tasks from both cities. (2) \textbf{Baseline}, representing the standard UniAD model trained on our domain-partitioned data. (3) Two re-implemented PS-MTL methods, \textbf{MTPSL}~\cite{li2022learning} and \textbf{JTR}~\cite{nishi2024joint}, whose core mechanisms we adapt and integrate into the UniAD framework.

\begin{table*}[t]
    \centering

  \caption{\small{\textbf{Multi-object tracking and Online mapping results.}} Ours achieves competitive performance against Sota methods on both tasks. For online mapping, we report segmentation IoU (\%).}
  \vspace{-3mm}
    
    \scalebox{0.8}{
    \begin{tabular}{l|cccc|cccc}
        \toprule
        \multirow{2}{*}{Method} & \multicolumn{4}{c|}{Multi-object Tracking} & \multicolumn{4}{c}{Online Mapping} \\
        \cmidrule{2-9}
        & \cellcolor{gray!30}AMOTA$\uparrow$ & AMOTP$\downarrow$ & Recall$\uparrow$ & IDS$\downarrow$ & \cellcolor{gray!30}Lanes$\uparrow$ & Drivable$\uparrow$ & Divider$\uparrow$ & Crossing$\uparrow$ \\
        \midrule
        Full-supervision & \cellcolor{gray!30}{0.323} & {1.328} & {0.431} & {696} & \cellcolor{gray!30}31.4 & {71.42} & {34.4} & {21.3} \\
        \midrule
        Baseline~\cite{hu2023planning} & \cellcolor{gray!30}{0.289} & {1.488} & {0.362} & {1025} & \cellcolor{gray!30}27.1 & 62.7 & 22.6 & 14.1 \\
        MTPSL~\cite{li2022learning} & \cellcolor{gray!30}0.255 & 1.504 & 0.321 & 1089 & \cellcolor{gray!30}27.0 & 59.6 & 21.7 & 11.5 \\
        JTR~\cite{nishi2024joint} & \cellcolor{gray!30}0.197 & 1.547 & 0.317 & 774 & \cellcolor{gray!30}25.1 & 57.6 & 21.0 & 12.1 \\
        \textbf{Ours (One stage)} & \cellcolor{gray!30}0.318 & 1.353 & 0.407 & 734 & \cellcolor{gray!30}37.0 & 63.7 & 29.4 & 21.5 \\
        \textbf{Ours (Two stage)} & \cellcolor{gray!30}\textbf{0.329} & \textbf{1.322} & \textbf{0.428} & \textbf{690} & \cellcolor{gray!30}\textbf{37.1} & \textbf{64.5} & \textbf{30.0} & \textbf{22.8} \\
        \bottomrule
    \end{tabular}
    }
    \vspace{-3pt}

    \label{tab:sota-combined}
    \vspace{-10pt}
\end{table*}

\begin{table*}[h]
    \centering
    \caption{\textbf{Ablation study on multi-object tracking and online mapping.} Ours with 6 layers achieves the best performance on both tasks.}    
    \scalebox{0.8}{
    \begin{tabular}{l|cccc|cccc}
        \toprule
        \multirow{2}{*}{Method} & \multicolumn{4}{c|}{Multi-object Tracking} & \multicolumn{4}{c}{Online Mapping (IoU \%)} \\
        \cmidrule{2-9}
        & \cellcolor{gray!30}AMOTA$\uparrow$ & AMOTP$\downarrow$ & Recall$\uparrow$ & IDS$\downarrow$ & \cellcolor{gray!30}Lanes$\uparrow$ & Drivable$\uparrow$ & Divider$\uparrow$ & Crossing$\uparrow$ \\
        \midrule
        Baseline~\cite{hu2023planning} & \cellcolor{gray!30}{0.289} & {1.488} & {0.362} & {1025} & \cellcolor{gray!30}27.1 & 62.7 & 22.6 & 14.1 \\
        Ours (w/o inv) & \cellcolor{gray!30}0.214 & 1.507 & 1.355 & 731 & \cellcolor{gray!30}32.3 & 56.8 & 27.7 & 21.9 \\
        Ours (1 Layer) & \cellcolor{gray!30}0.236 & 1.482 & 0.389 & 982 & \cellcolor{gray!30}32.9 & 57.2 & 28.1 & 20.9 \\
        Ours (2 Layer) & \cellcolor{gray!30}0.258 & 1.475 & 0.396 & 913 & \cellcolor{gray!30}33.5 & 58.2 & 28.3 & 21.0 \\
        Ours (4 Layer) & \cellcolor{gray!30}0.292 & 1.405 & 0.402 & 854 & \cellcolor{gray!30}35.3 & 60.3 & 28.6 & 21.1 \\
        \textbf{Ours (6 Layer)} & \cellcolor{gray!30}\textbf{0.329} & \textbf{1.322} & \textbf{0.428} & \textbf{690} & \cellcolor{gray!30}\textbf{37.1} & \textbf{64.5} & \textbf{30.0} & \textbf{22.8} \\
        Ours (8 Layer) & \cellcolor{gray!30}0.247 & 1.453 & 0.385 & 1054 & \cellcolor{gray!30}33.1 & 57.9 & 28.5 & 20.7 \\
        \bottomrule
    \end{tabular}
    }
    \vspace{-3pt}

    \label{tab:ablation-combined}
    \vspace{-10pt}
\end{table*}

\bfsection{Implementation Details}
For the nuScenes experiments, all methods are trained from scratch following the default setting in \textbf{UniAD}~\cite{hu2023planning} on four NVIDIA A100 80G GPUs. Our proposed \textbf{NexusFlow} module, implemented with $N=4$ invertible coupling layers, is inserted between the shared BEV encoder and the two disparate task decoders to align their latent feature distributions.

\begin{figure}[t]
  \centering
  \includegraphics[width=0.47\textwidth]{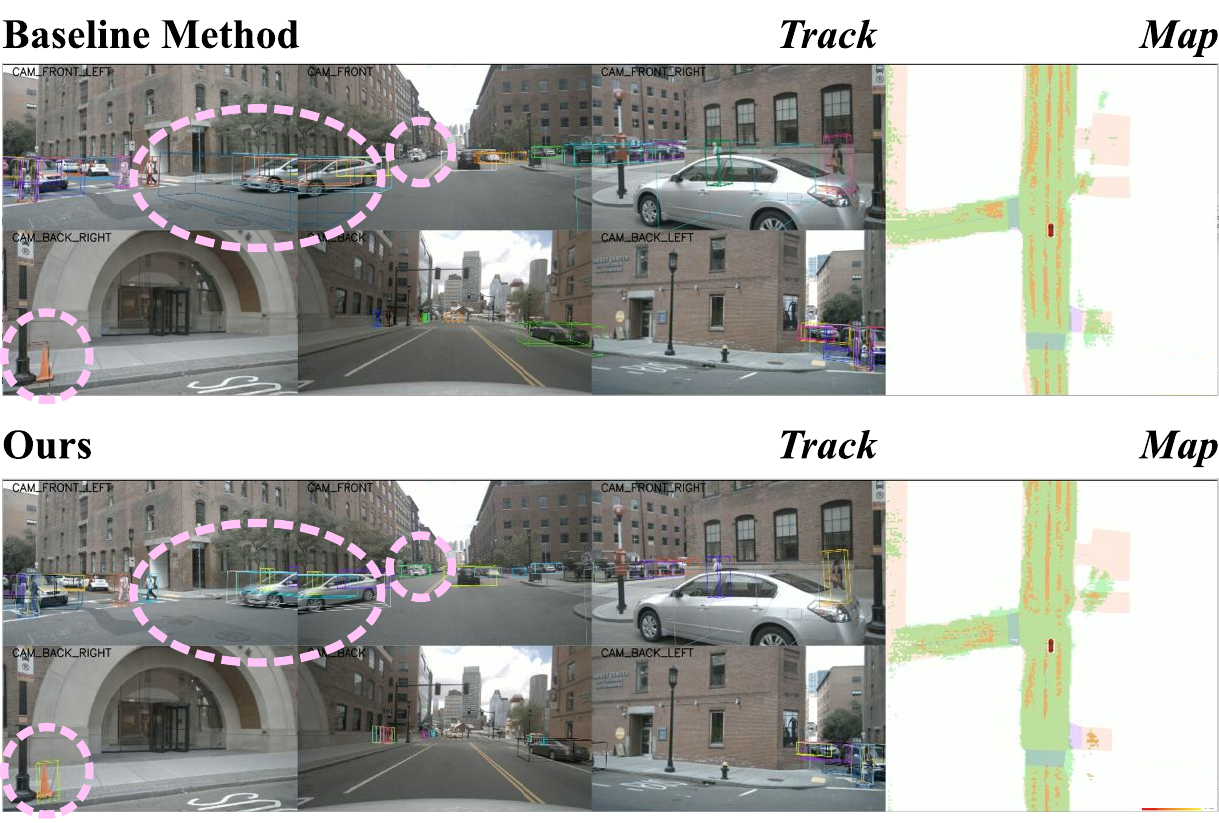}
  \vspace{-5pt}
  \caption{We show visualization results for multi-object tracking and online mapping tasks in surround-view images and BEV from left to right. We highlight the difference with red dashed circle.
  }
  \label{fig:vis}
\vspace{-20pt}
\end{figure}

\bfsection{Quantitative Results}
We present our results in Table~\ref{tab:sota-combined}, the main metric for each task highlighted in gray. Training is strictly conducted under our scenario-based partial supervision protocol, evaluation is performed on the complete validation set, including scenes from both Boston and Singapore. Our proposed NexusFlow significantly outperforms all partially supervised baselines and, remarkably, achieves performance competitive with the Full-supervision.

\begin{itemize}[nosep, leftmargin=*]

\item {\it Multi-Object Tracking:} For the tracking task (left side of the table), NexusFlow sets a new SOTA for this challenging PS-MTL setting. It surpasses MTPSL~\cite{li2022learning} by a large margin of \textbf{+7.4\% AMOTA} and improves upon the standard Baseline~\cite{hu2023planning} by \textbf{+4.0\% AMOTA}. Furthermore, our method achieves the lowest ID Switch score, demonstrating a superior ability to maintain temporal consistency for each tracklet.

\item {\it Online Mapping:} For the mapping task (right side of the table), our method shows substantial gains in segmenting crucial road elements. Notably, it outperforms the Baseline \cite{hu2023planning} and MTPSL \cite{li2022learning} on lanes by over \textbf{+10\% IoU}, a critical component for safe downstream motion planning. This result confirms that effective knowledge was transferred from the tracking-annotated domain (Singapore) to enhance the mapping performance.


\end{itemize}

Meanwhile, we observe that prior state-of-the-art PS-MTL methods, \textbf{MTPSL}~\cite{li2022learning} and \textbf{JTR}~\cite{nishi2024joint}, designed for homogeneous dense prediction tasks can even degrade baseline performance in our setting. This highlights that the more realistic scenario of structurally disparate, domain-partitioned partial labels can break methodologies tailored only for small-scale, homogeneous multi-task learning.

\bfsection{Qualitative Results} These results are corroborated by our qualitative analysis in Figure~\ref{fig:vis}. We observe that our model is able to detect objects more accurately. A full video comparison is provided in the supplementary material.

\bfsection{Ablation Study}
We conduct an ablation study to validate our design choices, with results presented in Table~\ref{tab:ablation-combined}. We compare our full model against two variants: the \textbf{Baseline} (without the NexusFlow module) and \textbf{Ours (w/o inv)} (without the invertible coupling layers). The results confirm that our full model significantly outperforms both variants. While \textbf{Ours (w/o inv)} surpasses the \textbf{Baseline}, the performance gap to the full model underscores the critical role of the invertible transformation. Furthermore, we find that a coupling layer depth of 6 achieves the optimal performance, which we adopt in our final design.

\subsection{N-Task Challenge on NYU-V2}
\bfsection{Dataset and PS-MTL Protocol}
Although our main experiments focus on the two-task setting, we further evaluate NexusFlow under a more general $N$-task MTL framework.  
Due to constraints on data availability, we consider a simpler homogeneous dense-prediction setup and adopt the \textbf{NYU-V2} dataset~\cite{silberman2012indoor}, which provides three standard tasks: 13-class semantic segmentation, depth estimation, and surface normal prediction.  
Following common practice, all images are resized to $288 \times 384$~\cite{liu2019end}.
To simulate the PS-MTL setting, instead of randomly selecting a single task label per sample and masking the losses of the others~\cite{ruder2019latent,li2022learning}, we partition the entire dataset into three roughly equal subsets using clustering, and assign exactly one task’s ground-truth label to each subset. This enforces a realistic partial supervision regime in which only one task annotation is available per training example.





\noindent
\bfsection{Baselines and Architecture}
We adopt a standard MTL architecture, which is common for homogeneous dense-prediction tasks~\cite{li2022learning}. This consists of a  Multi-Task Attention Network~\cite{liu2019end}, including a feature encoder followed by three lightweight, task-specific decoders. We compare \textbf{NexusFlow} against several key baselines: (1) \textbf{Full-supervision }, trained with full supervision for all three tasks. (2) \textbf{Baseline}, representing the standard shared-encoder model trained \textit{only} on our PS-MTL Protocol. (3) \textbf{MTPSL}~\cite{li2022learning} with our PS-MTL protocol.

\begin{table}[t]
	\centering
		\caption{\small{Semantic segmentation, depth estimation, and surface normals estimation results on NYU-v2. Ours achieves the best performance on all tasks in the partially supervised MTL setting.}}
    
    \resizebox{0.47\textwidth}{!}
    {
		\begin{tabular}{lccccccccccc}

		    \toprule
		    Method & Seg. (IoU) $\uparrow$ & Depth (aErr) $\downarrow$ & Norm. (mErr) $\downarrow$ \\
		    \midrule
		    Full-supervision   & 35.84 & 0.5694 & 30.42 \\
		    \midrule
		    Baseline & 26.75 & 0.6511 & 35.17 \\
            MTPSL~\cite{li2022learning}  & 29.65 & 0.6286 & 33.28 \\
            Ours ($\mathcal{L}_{\text{alignpair}}$) & {\bf 31.35} & {\bf 0.6082} & {\bf 31.74} \\
            Ours ($\mathcal{L}_{\text{aligncenter}}$) & {\bf 31.70} & {\bf 0.6055} & {\bf 31.88} \\
			\bottomrule
		\end{tabular}%
			}
		\vspace{-0.1cm}

		\label{tab:nyuv2full}
\vspace{-15pt}
\end{table}%

\bfsection{Implementation Details}
For the NYU-V2 experiments, all methods are trained from scratch following the default setting in MTPSL~\cite{li2022learning} on one NVIDIA A100 80G GPU. Our proposed \textbf{NexusFlow} module, implemented with $N=3$ invertible coupling layers, is inserted between the feature encoder and the three task decoders to align their latent feature distributions.

\bfsection{Quantitative Results}
We present our results in Table~\ref{tab:nyuv2full}. Our proposed NexusFlow significantly outperforms all partially supervised baselines. Although still left behind with the Full-supervision, it has a strong advantage in computational cost, especially when compared with MTPSL~\cite{li2022learning} in terms of training time and GPU memory (shown in Table~\ref{tab:performance}).

\bfsection{Qualitative Result} We show qualitative comparisons in Figure~\ref{fig:vis2} for all three tasks. We observe that our framework produces more accurate predictions in every task, especially in the semantic segmentation and depth estimation tasks.

\begin{table}[t!]
	\centering

	\caption{\textbf{Efficiency Comparison.} Our method matches the fully supervised MTL and partially supervised MTL baseline in both training time and GPU memory usage, while achieving nearly a 50\% reduction in memory cost compared to MTPSL.}
    
    \resizebox{0.42\textwidth}{!}
    {
		\begin{tabular}{lcc}
			
		    \toprule
		    Method & Training Time $\downarrow$ & GPU Memory $\downarrow$\\
		    \midrule
		    Full-supervision   & 8$\sim$hours & 3906 MiB \\
              \midrule
		    Baseline & 8$\sim$hours & 3906 MiB \\
            MTPSL~\cite{li2022learning} & 10$\sim$hours & 8900 MiB \\
		    Ours & {\bf 8$\sim$hours} & {\bf 4554 MiB} \\
			\bottomrule
		\end{tabular}%
	}
	\vspace{-0.2cm}

	\label{tab:performance}
\end{table}%

\begin{figure}[t]
  \centering
  \includegraphics[width=0.47\textwidth]{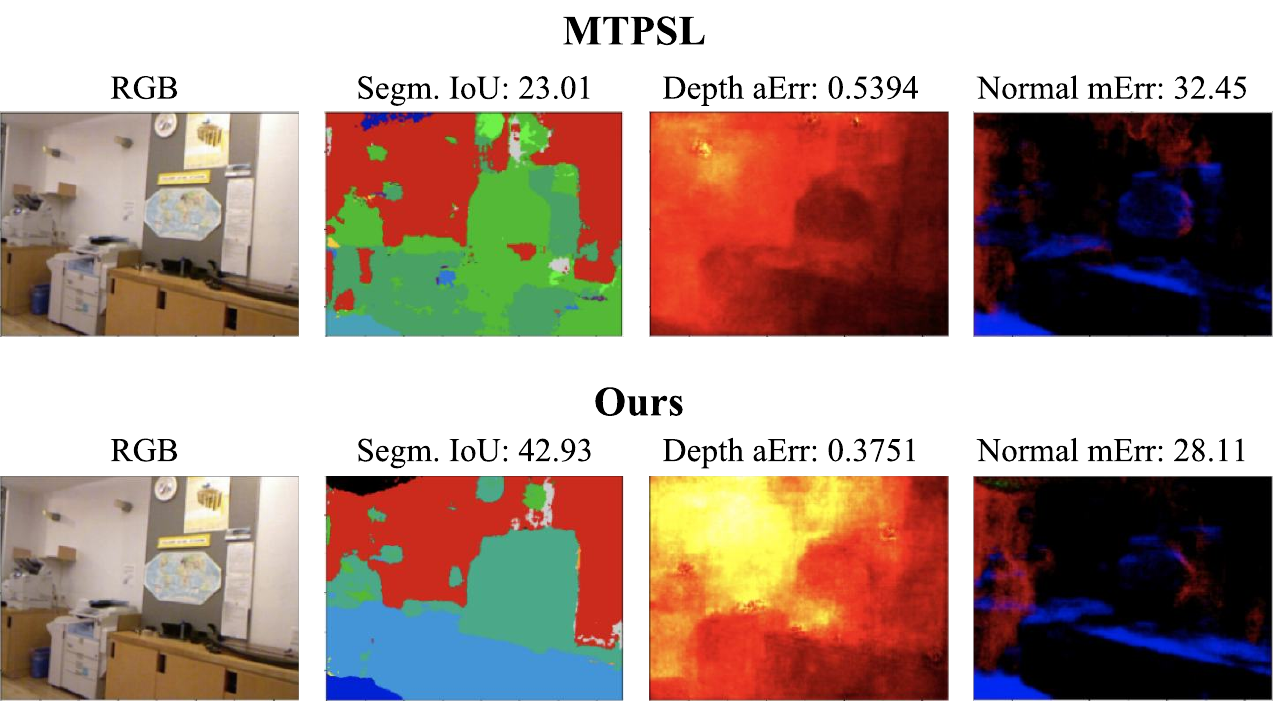}
  \vspace{-5pt}
  \caption{\textbf{Qualitative results on NYU-v2.} The fist column shows the RGB image, the second column plots the predictions with the IoU ($\uparrow$) score of all methods for semantic segmentation, the third column presents the predictions of depth estimation with the absolute error ($\downarrow$), and we show the prediction of surface normal with mean error ($\downarrow$) in the last column.}
  \label{fig:vis2}
\vspace{-20pt}
\end{figure}

\section{Conclusion}

\vspace{-2mm}
In this paper, we formally addressed the previously underexplored setting of Partially Supervised Multi-Task Learning (PS-MTL) involving structurally disparate tasks with domain-partitioned supervision. To tackle this challenge, we introduced \textbf{NexusFlow}, a lightweight and plug-and-play module that aligns task-specific feature distributions using independent invertible coupling layers, enabling effective knowledge transfer across heterogeneous tasks. Our theoretical and empirical analyses show that these invertible transformations offer stable and reliable cross-task alignment while avoiding dimensionality collapse. Extensive experiments on both autonomous driving and indoor dense prediction benchmarks further demonstrate the strong effectiveness and broad generalization capability of NexusFlow.

\newpage
\clearpage
{
    \small
    \bibliographystyle{ieeenat_fullname}
    \bibliography{main}
}

\clearpage
\setcounter{page}{1}
\onecolumn 
\begin{center}
    {\LARGE \textbf{Supplementary Materials}}
\end{center}

\appendix
\counterwithin{figure}{section} 
\renewcommand{\thefigure}{\thesection.\arabic{figure}}

\counterwithin{equation}{section}
\counterwithin{table}{section}

\renewcommand{\theequation}{\thesection.\arabic{equation}}
\renewcommand{\thetable}{\thesection.\arabic{table}}

\section{Supplementary experiments on nuScenes}

We conduct comprehensive supplementary experiments to further validate our design choices. First, we investigate the impact of different feature aggregation architectures: comparing MLPs with deformable attention layers (defo.), with results summarized in Table~\ref{tab:supplementary-nuScenes1}. We observe that as long as the key coupling layer is present, performance remains stable, and architectures equipped with deformable attention achieve even stronger results. In contrast, removing the coupling layer leads to a substantial performance drop. These findings further confirm that our invertible transformations provide reliable alignment of feature distributions across heterogeneous tasks while effectively mitigating dimensionality collapse.

We also evaluate two geographically partitioned supervision settings. The Standard Geo-Split (S-Geo) setup follows our main protocol, where mapping supervision is available only in \textbf{Boston} and tracking supervision only in \textbf{Singapore}. We additionally introduce a Reversed Geo-Split (R-Geo) configuration that swaps the supervision visibility across regions. This complementary variant allows us to assess the consistency of our method under geographically perturbed supervision, with results reported in Table~\ref{tab:supplementary-nuScenes2}.
\vspace{5pt}
\begin{table}[!h] 
    \centering
    \vspace{10pt}
    \caption{\textbf{Supplementary experiments on nuScenes (Architecture).} We evaluate different settings on nuScenes dataset with different feature aggregation architectures.}    
    \scalebox{0.75}{ 
    \begin{tabular}{l|cccc|cccc}
        \toprule
        \multirow{2}{*}{Method} & \multicolumn{4}{c|}{Multi-object Tracking} & \multicolumn{4}{c}{Online Mapping (IoU \%)} \\
        \cmidrule{2-9}
        & \cellcolor{gray!30}AMOTA$\uparrow$ & AMOTP$\downarrow$ & Recall$\uparrow$ & IDS$\downarrow$ & \cellcolor{gray!30}Lanes$\uparrow$ & Drivable$\uparrow$ & Divider$\uparrow$ & Crossing$\uparrow$ \\
        \midrule
        Baseline & \cellcolor{gray!30}{0.289} & {1.488} & {0.362} & {1025} & \cellcolor{gray!30}27.1 & 62.7 & 22.6 & 14.1 \\
        Ours (MLP w/o Inv) & \cellcolor{gray!30}0.214 & 1.507 & 1.355 & 731 & \cellcolor{gray!30}32.3 & 56.8 & 27.7 & 21.9 \\
        Ours (MLP + Inv) & \cellcolor{gray!30}0.316& 1.250 & 0.397 & 879 & \cellcolor{gray!30}36.2 & 62.31 & 28.80 & 20.75 \\
        \textbf{Ours (defo. + Inv)} & \cellcolor{gray!30}\textbf{0.329} & \textbf{1.322} & \textbf{0.428} & \textbf{690} & \cellcolor{gray!30}\textbf{37.1} & \textbf{64.5} & \textbf{30.0} & \textbf{22.8} \\
        \bottomrule
    \end{tabular}
    }
    \label{tab:supplementary-nuScenes1}


    \caption{\textbf{Supplementary experiments on nuScenes (Geo-Split).} We evaluate two geographically partitioned supervision settings on nuScenes dataset.}    
    \scalebox{0.75}{
    \begin{tabular}{l|cccc|cccc}
        \toprule
        \multirow{2}{*}{Method} & \multicolumn{4}{c|}{Multi-object Tracking} & \multicolumn{4}{c}{Online Mapping (IoU \%)} \\
        \cmidrule{2-9}
        & \cellcolor{gray!30}AMOTA$\uparrow$ & AMOTP$\downarrow$ & Recall$\uparrow$ & IDS$\downarrow$ & \cellcolor{gray!30}Lanes$\uparrow$ & Drivable$\uparrow$ & Divider$\uparrow$ & Crossing$\uparrow$ \\
        \midrule
        Baseline & \cellcolor{gray!30}{0.289} & {1.488} & {0.362} & {1025} & \cellcolor{gray!30}27.1 & 62.7 & 22.6 & 14.1 \\
        Ours (R-Geo) & \cellcolor{gray!30} \textbf{0.332}& \textbf{1.304} & \textbf{0.435} & \textbf{653} & \cellcolor{gray!30}36.6 & 63.8 & 29.7 & 22.5 \\
        \textbf{Ours (S-Geo)} & \cellcolor{gray!30}{0.329} & {1.322} & {0.428} & {690} & \cellcolor{gray!30}\textbf{37.1} & \textbf{64.5} & \textbf{30.0} & \textbf{22.8} \\
        \bottomrule
    \end{tabular}
    }
    \label{tab:supplementary-nuScenes2}
    \vspace{-5pt} 
\end{table}
\vspace{5pt}
\section{Supplementary experiments on NYU-V2}




We conduct additional supplementary experiments to validate our design choices on dense prediction tasks. We first examine the impact of different feature aggregation architectures by comparing MLP-based aggregators under the same invertible-layer configuration (3 layers), as shown in Table~\ref{tab:supplementary_nyuv2_left}. We then evaluate the complementary setting in which the feature aggregator is fixed (MLP, 3 layers) while varying the number of invertible layers, with results reported in Table~\ref{tab:supplementary_nyuv2_right}.


\vspace{5pt}
\begin{table}[!h]
    \centering
    
    \begin{minipage}[t]{0.49\textwidth}
        \centering
        \caption{\small{\textbf{NYU-V2 Results Part 1.} We evaluate settings on NYU-V2 dataset with different MLP layers in feature aggregator.}}
        \label{tab:supplementary_nyuv2_left}
        
        \resizebox{\linewidth}{!}{
            \begin{tabular}{lccccccccccc}
                \toprule
                Method & Seg. (IoU) $\uparrow$ & Depth (aErr) $\downarrow$ & Norm. (mErr) $\downarrow$ \\
                \midrule
                Baseline & 26.75 & 0.6511 & 35.17 \\
                1 MLP & 28.57 & 0.6841 & 34.54 \\
                2 MLP & 29.46 &  0.6311 & 33.42 \\
                \textbf{3 MLP} & {\bf 31.70} & {\bf 0.6055} & {\bf 31.88} \\
                4 MLP &  29.50 & 0.6404  & 33.39 \\
                5 MLP & 29.78 & 0.6268  & 33.12 \\            
                6 MLP & 29.85  & 0.6258  & 33.05 \\
                \bottomrule
            \end{tabular}
        }
    \end{minipage}
    \hfill 
    \begin{minipage}[t]{0.49\textwidth}
        \centering
        \caption{\small{\textbf{NYU-V2 Results Part 2.} We evaluate settings on NYU-V2 dataset with different Invertible layers.}}
        \label{tab:supplementary_nyuv2_right}
        
        \resizebox{\linewidth}{!}{
            \begin{tabular}{lccccccccccc}
                \toprule
                Method & Seg. (IoU) $\uparrow$ & Depth (aErr) $\downarrow$ & Norm. (mErr) $\downarrow$ \\
                \midrule
                Baseline & 26.75 & 0.6511 & 35.17 \\
                1 Inv & 28.20 & 0.6300 & 33.50 \\
                2 Inv & 29.50 & 0.6200 & 33.10 \\
                3 Inv & 29.90 & 0.6250 & 33.00 \\                
                \textbf{4 Inv} & \textbf{31.70} & \textbf{0.6055} & \textbf{31.88} \\

                5 Inv & 29.80 & 0.6280 & 33.20 \\
                6 Inv & 27.10 & 0.6400 & 34.00 \\
                \bottomrule
            \end{tabular}
        }
    \end{minipage}

    \vspace{-5pt} 
\end{table}



\end{document}